\newcommand{\new}{\text{new}}
\newcommand{\old}{\text{old}}
\newcommand{\rank}{\text{rank}}
\newtheorem{lemma}{Lemma}
\begin{document}

\title{Layerwise Optimization by Gradient Decomposition for Continual Learning}

\author{Shixiang Tang$^1$\footnotemark[2] \qquad Dapeng Chen$^3$ \qquad Jinguo Zhu$^2$ \qquad Shijie Yu$^4$ \qquad Wanli Ouyang$^1$ \\
$^1$The University of Sydney, SenseTime Computer Vision Group, Australia \qquad $^2$Xi'an Jiaotong University \\
$^3$Sensetime Group Limited, Hong Kong \qquad $^4$Shenzhen Institutes of Advanced Technology, CAS \\  
{\tt\small tangshixiang@sensetime.com \quad  dapengchenxjtu@yahoo.com \quad  wanli.ouyang@sydney.edu.au }
}


\maketitle
\thispagestyle{empty}
\footnotetext[2]{This work was done when Shixiang Tang was an intern at SenseTime.}

\begin{abstract}
Deep neural networks achieve state-of-the-art and sometimes super-human performance across various domains. However, when learning tasks sequentially, the networks easily forget the knowledge of previous tasks, known as ``catastrophic forgetting''. To achieve the consistencies between the old tasks and the new task, one effective solution is to modify the gradient for update. Previous methods enforce independent gradient constraints for different tasks, while  we consider these gradients contain complex information, and propose to leverage inter-task information by gradient decomposition. In particular, the gradient of an old task is decomposed into a part shared by all old tasks and a part specific to that task. The gradient for update should be close to the gradient of the new task, consistent with the gradients shared by all old tasks, and orthogonal to the space spanned by the gradients specific to the old tasks. In this way, our approach encourages common knowledge consolidation without impairing the task-specific knowledge. Furthermore, the optimization is performed for the gradients of each layer separately rather than  the concatenation of all gradients as in previous works. This effectively avoids the influence of the magnitude variation of the gradients in different layers. Extensive experiments validate the effectiveness of both gradient-decomposed optimization and layer-wise updates. Our proposed method achieves state-of-the-art results on various benchmarks of continual learning. 

\end{abstract}

\section{Introduction}
Recent years have witnessed the great progress in deep learning (DL) through training on large datasets. A typical supervised DL task requires independent and identically distributed (\textit{i.i.d}) samples from a stationary distribution~\cite{he2016deep,ren2015faster,simonyan2014very}. However, many DL models deployed in the real-world are exposed to non-stationary situations where data is acquired sequentially and its distribution varies over time. In this scenario, DNNs trained by Stochastic Gradient Descent (SGD) will easily forget the knowledge from previous tasks while adapting to the information from the incoming tasks. This phenomenon, known as \textit{catastrophic forgetting}, invokes more effective algorithms for continual learning (CL), the goal of which is to learn consecutive tasks without severe performance degradation on previous tasks~\cite{castro2018end,lopez2017gradient,parisi2019continual,rebuffi2017icarl,shmelkov2017incremental,shin2017continual,zhao2021continual,zhao2021continual,tang2021gradient}.

One of the popular attempts for continual learning relies on a set of episodic memories, where each episodic memory stores representative data from an old task~\cite{castro2018end,rebuffi2017icarl,lopez2017gradient}. In particular, the network parameters are jointly optimized by the recorded samples that are regularly replayed and the samples drawn from the new task. An effective solution is to modify the gradients for updates. The methods like GEM, A-GEM and S-GEM obtain gradient update by requiring the loss of each old task does not increase~\cite{chaudhry2018efficient,lopez2017gradient}. In practice, the constraints are imposed by forcing the inner product between the gradient for update and the gradient of every old task non-negative. 

In this work, we consider that the gradients of multiple tasks have mixed information and should be disentangled when used for learning. There are two parts mixed in the gradients, shared gradient and task-specific gradients. Optimization along the shared gradient will be beneficial for memorizing all old tasks, but optimization along task-specific gradients will fall into a dilemma that optimizing one episodic memory will inevitably damage another. Therefore, different constraints are imposed on two gradients. We encourage the consistency between gradient for update and the shared gradient but expect that the gradient for update will be orthogonal to all task-specific gradients. The first constraint can be mathematically presented as the inner product of shared gradient and gradient for update non-negative. The second constraint requires the gradient for update should be orthogonal to the space spanned by all task-specific gradients. Our further analysis points out that the second constraints can be relaxed by PCA, which captures the most important gradient constraints.  

Furthermore, we observe the large variation of magnitudes for the gradients in different layers. However, the previous gradient modification methods ignore the intrinsic magnitude variations of different layers. They concatenate the gradients from different layers into a vector and construct optimization problems under the constraints made by these concatenated vectors. We argue that during optimization, gradients from some layers have larger magnitudes and they will dominate the solution of the optimization problem. However, no evidence shows that these layers are more important for loss minimization than the others. To address this problem, we propose a layer-wise gradient update strategy, where unique gradient constraints are imposed by each layer for optimization and the solution is only specific to the parameters in that layer. Our further analysis manifests that the layer-wise optimization strategy increases the efficiency of reducing old task losses
The contribution of this paper is two-fold: (1) Gradient decomposition is leveraged to specify the shared and task-specific information in the episodic memory. Different constraints are imposed based on the shared gradient and tasks-specific gradient respectively. (2) Layer-wise gradient update strategy is proposed to deal with large magnitude variations between gradients from different layers and thus it can reduce the losses of episodic memory more efficiently. Extensive ablation studies validate the effectiveness of the two improvements.

\section{Related Work}

\subsection{Continual Learning}
Continual learning has been a long-standing research problem in the field of machine learning~\cite{mccloskey1989catastrophic,silver2013lifelong,pentina2014pac,sodhani2018training}. Generally speaking, there are three different types of scenarios for continual learning: (1) class-incremental scenario, where the number of class labels keeps growing but no explicit task boundary is defined under test; (2) task-incremental scenario, where the boundaries among tasks are assumed known and the information about the task under test is given.
(3) data-incremental scenario, where the set of class labels are identical for all tasks. Existing work of above three scenarios mainly falls into two categories: Regularization-based and Memory-based approaches. 

\noindent \textbf{Regularization-based Approach:} Regularization approaches do not require data from old tasks. Rather, they attempted to tackle catastrophic forgetting by discouraging changes on important parameters or penalizing the change of activations on old tasks. The former approaches, such as EWC~\cite{kirkpatrick2017overcoming}, SI~\cite{zenke2017continual}, MAS~\cite{aljundi2018memory} and ALASSO~\cite{park2019continual} relied on different estimations of parameters importance, which was usually conducted by exhaustive search~\cite{rusu2016progressive} or variational Bayesian methods~\cite{nguyen2017variational,zeno2018task}. The latter approaches, such as LWF~\cite{li2017learning}, LFL~\cite{jung2016less}, LWM~\cite{dhar2019learning} and BLD~\cite{fini2020online}, leveraged knowledge distillation~\cite{hinton2015distilling,CRCD} among consecutive tasks.

\noindent \textbf{Memory-based Approach:} 
Memory-based approaches leverage episodic memory that stores representative samples from each old task to overcome catastrophic forgetting. One popular framework is multitask learning, where the model shares the same backbone but is equipped with different task-specific classifiers. During training, all episodic memory are mixed to form an old task and the loss of the model is defined by both old and new tasks. Example methods of this framework were iCarl~\cite{rebuffi2017icarl}, End2End~\cite{castro2018end} and DR~\cite{hou2018lifelong}. Under this framework, various problems were considered, such as data imbalance~\cite{hou2019learning,park2019continual}, using unlabeled data~\cite{lee2019overcoming} and the bias of fully-connected layers~\cite{wu2019large}. Another popular framework for memory-based approaches is to use episodic memory tasks to construct the optimization problems. GEM~\cite{lopez2017gradient} was the first work under this framework. In the method, each episodic memory constructed one constraint independently. During training the new task, GEM ensured the loss of every episodic memory non-increase. Further improvements on GEM such as A-GEM~\cite{chaudhry2018efficient} and S-GEM~\cite{chaudhry2018efficient} relaxed the constraints by either considering the loss averaged on all old tasks or constraining the loss of a random episodic memory every training iteration. Although these methods improved GEM in different aspects, they treated every episodic memory independently and no further analysis or decomposition of these gradients was considered in these works. 

\subsection{Layerwise Gradient Update}
Stochastic Gradient Descent is the most widely used optimization techniques for training DNNs~\cite{bottou2010large,loshchilov2016sgdr,bordes2009sgd}. However, it applied the same hyper-parameters to update all parameters in different layers, which may not be optimal for loss minimization. Therefore, layerwise adaptive optimization algorithms were proposed~\cite{duchi2011adaptive,kingma2014adam}. RMSProp~\cite{ruder2016overview} altered the learning rate of each layer by dividing the square root of its exponential moving average. LARS~\cite{you2017large} let the layerwise learning rate be proportional to the ratio of the norm of the weights to the norm of the gradients.
Both layerwise adaptive optimizers solved the variation of update frequencies for different layers and thus outperformed SGD in various large-scale benchmarks. Layerwise gradient update strategy is also applied in meta-learning for rapid loss convergence on transfer learning~\cite{ge2020mutual,ge2020selfpaced,su2020adapting} and few-shot learning~\cite{MutualCRF,Li_2019_ICCV}. \cite{flennerhag2019meta} proposed WarpGrad which layerwisely meta-learned to warp task loss surfaces across the joint task-parameter distribution to facilitate gradient descent. MT-Net~\cite{lee2018gradient} enabled the meta-learner to learn on each layer's activation space, a subspace that the task-specific learner performed gradient descent on. Our method differs from the above methods in two aspects. Our layerwise gradient update strategy aims at preserving old knowledge. Instead, RMSProp, LARS aimed to learn new tasks more efficiently, and WarpGrad, MT-Net aimed to transfer more related information in source data to target samples, which have different targets with our method. Secondly, RMSProp and LARS tried to handle the different update frequency between parameters in different layers but our method aims at handling large magnitude variation of parameters in different layers.

\section{Methodology}
For continual learning, we first define a task sequence $\{ \mathcal{T}_{1}, \mathcal{T}_{2},\cdots, \mathcal{T}_{N} \}$ of $N$ tasks. For the $t$-th task $\mathcal{T}_t$, there is a training dataset $\mathcal{D}_{t}$ and a memory coreset $\mathcal{M}_{t}^{\text{cor}}$. In particular, $\mathcal{D}_{t}=\{(x_t^{i}, y_t^{i})\}_{i=1}^{n_t}$, where each instance $(x_t^i, y_t^i)$ is composed of an image $x_t^i\in \mathcal{X}_t$ and a label $y_t^i \in \mathcal{Y}_t$. The coreset is represented by $\mathcal{M}_{t}^{\text{cor}}=\mathcal{M}_{t-1}^{\text{cor}}\cup\mathcal{M}_{t}$, where $\mathcal{M}_{t}$ is the episodic memory of task $\mathcal{T}_t$ that stores representative data from  $\mathcal{D}_{t}$.

The goal at the $t$-th step is to train a function $f$ to perform the current task $\mathcal{T}_t$ as well as the previous tasks $\mathcal{T}_{1:(t-1)}=\{\mathcal{T}_{1}, \mathcal{T}_{2},\cdots, \mathcal{T}_{t-1}\}$. During training, all data in $\mathcal{D}_{t}$ and $\mathcal{M}_{t-1}^{\text{cor}}$ are used. Considering $f$ is parameterized by $\theta_t$, we define the losses for new task $\mathcal{T}_t$ and any previous task $\mathcal{T}_i(i<t)$ using the training data $\mathcal{D}_t$ and episodic memory $\mathcal{M}_i$ respectively, i.e.,
\begin{equation}
\begin{split}
      \mathcal{L}^{\new}( \theta_{t}, \mathcal{D}_t) &= \frac{1}{|\mathcal{D}_t|}\sum_{(x, y) \in \mathcal{D}_t}CE(f(x;\theta_{t}), y) \\
      \mathcal{L}^{\old}(\theta_t, \mathcal{M}_{i}) &= \frac{1}{|\mathcal{M}_{i}|}\sum_{(x, y) \in \mathcal{M}_{i} }CE(f(x;\theta_{t}), y). 
\end{split}
\label{eq:loss_def}
\end{equation}
where $CE$ is the standard cross-entropy loss. The gradient of new task is defined as:
\begin{equation}
     g = \frac{ \partial  \mathcal{L}^{\new}( \theta_{t}, \mathcal{D}_t) }{ \partial \theta_{t}}.
\end{equation}

\begin{figure*}
\centering
\includegraphics[width=0.95\linewidth]{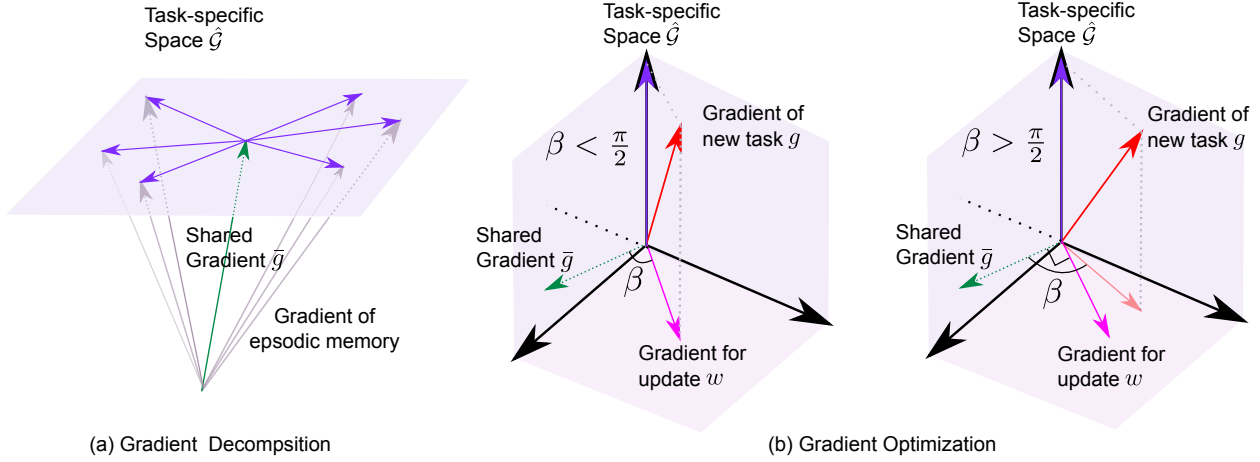}
\caption{\small{(a) Illustration of gradient decomposition. Light grey arrow: gradients of episodic memory. Green arrow: shared gradient $\bar{g}$. Purple arrow: task-specific gradients. Every gradient of episodic memory equals to the sum of the shared gradient $\bar{g}$ and its specific gradient $\hat{g}$. Task-specific Space $\hat{\mathcal{G}}$ is spanned by all task-specific gradients. (b) Illustration of gradient optimization. $\beta$ is the angle between the shared gradient $\bar{g}$ and the projection of $g$ onto the null space of $\hat{\mathcal{G}}$. $\beta < \frac{\pi}{2}$ and $\beta > \frac{\pi}{2}$ means condition $\bar{g}^{\top}Pg \ge 0$ and $\bar{g}^{\top}Pg < 0$ respectively. If $\beta < \frac{\pi}{2}$, the gradient for update $w$ is the projection of $g$ onto the null space of $\hat{\mathcal{G}}$. If $\beta > \frac{\pi}{2}$, the gradient for update $w$ should be orthogonal to both task-specific space $\hat{\mathcal{G}}$ and the shared gradient $\bar{g}$.}} 
\label{fig:algo}
\vspace{-0.5cm}
\end{figure*}

\subsection{Gradient Decomposition}
Biologically, continual learning is inspired by human learning. We learn the new tasks by applying the knowledge previously learned from the related tasks. In this way, there exists common knowledge shared among the old tasks. Given the losses of old task $\{\mathcal{L}^{\old}(\theta_t, \mathcal{M}_i)\}_{i=1}^{t-1}$, we assume each loss has two components. One is the shared loss, which is the same for any previous task. Minimizing the shared loss would improve the overall performance across all previous tasks. The other is task-specific component, reflecting the task-specific knowledge of every old task. 
For $\mathcal{L}^{old}(\theta_{t}, \mathcal{M}_{i})$ about the $i$-th episodic memory, it can be written as:
\begin{equation}
    \mathcal{L}^{old}( \theta_{t}, \mathcal{M}_i) = \mathcal{L}_{\text{shared}}^{old}(\theta_{t}, \mathcal{M}_{t-1}^{cor}) + \mathcal{R}^{old}_{\text{specific}}(\theta_{t}, \mathcal{M}_i)
\end{equation}
where $\mathcal{L}_{\text{shared}}^{old}(\theta^{t}, \mathcal{M}_{t-1}^{cor}) $ is the shared loss term, which is the same for different old tasks, and $\mathcal{R}^{old}_{\text{specific}}(\theta_{t}, \mathcal{M}_i)$ is the residual for the task $\mathcal{T}_{i}$. 

\noindent \textbf{Shared Gradient.} The shared gradient is driven by $\mathcal{L}_{\text{shared}}^{old}(\theta^{t}, \mathcal{M}_{t-1}^{cor})$. We define the shared loss by averaging all losses in the memory coreset $\mathcal{L}_{\text{shared}}^{old}(\theta^{t}, \mathcal{M}_{t-1}^{cor})$\\
$=\frac{1}{t-1} \sum_{i=1}^{t-1} \mathcal{L}^{old}(\theta_{t}, \mathcal{M}_{i})$. Thus, the shared gradient $\bar{g}$ can be obtained as follows:

\begin{equation}
\begin{aligned}
     \bar{g} = \frac{\partial \mathcal{L}_{\text{shared}}^{old}(\theta^{t}, \mathcal{M}_{t-1}^{cor})}{\partial \theta_t} = \frac{1}{t-1} \sum_{i=1}^{t-1}\frac{\partial \mathcal{L}^{\old}(\theta_{t}, \mathcal{M}_{i})}{\partial \theta_t},
\end{aligned}
     \label{eq:shared_def}
\end{equation}
where the equality holds since partial derivatives are linear.

\noindent \textbf{Task-specific Gradients.} The task-specific gradients $\hat{g}_{i}$ are then obtained by subtracting the shared gradient from the gradient of each task,
\begin{equation} \label{eq:spec_def}
  \hat{g}_{i} = \frac{\partial  \mathcal{L}^{\text{old}}( \theta_{t}, \mathcal{M}_i) }{ \partial \theta_t} -  \bar{g}.
\end{equation}
We further construct the task-specific matrix $\hat{G}=[\hat{g}_1, \hat{g}_2,\cdots,\hat{g}_{t-1}]\in\mathbb{R}^{|\theta_{t}|\times (t-1)}$, where $|\cdot|$ represents cardinality. We denote the column space of $\hat{G}$ by $\hat{\mathcal{G}}$. Since the shared component is subtracted, the dimension of $\hat{\mathcal{G}}$ is at most $t-2$. Here we assume that the number of tasks is less than the number of parameters of the model, i.e., $N<|\theta_{t}|$, which generally holds for deep network. The schematic illustration of gradient decomposition is presented in Figure~\ref{fig:algo}(a).

\subsection{Gradient Optimization} \label{paragraph:GO}

Our algorithm aims to find a gradient $w$ that reduces the loss of the new task but does not increase the losses of any memory tasks. Inspired by GEM, if we assume the shared loss function is locally linear, the change of loss can be diagnosed by the sign of the inner product between its corresponding gradient $\bar{g}$ and the update $w$. The positive, zero, or negative inner product of $w$ and $\bar{g}$ indicates that the shared loss will decrease, preserve or increase respectively if we update the network by $-\eta w$, where $\eta$ is a small positive value. Similar consideration applies to the task-specific residuals by replacing $\bar{g}$ with $\hat{g}_i$. 

Based on the above observations, our algorithm requires that the gradient update $w$ should be close to $g$ by minimizing $\|w -g\|_{2}^{2}$ and will not increase the shared loss by constraining $\bar{g}^\top w \ge 0$, That is:
\begin{equation}
    \underset{w}{\min}\quad\frac{1}{2}||w-g||^2_2, \qquad \text{s.t.} \quad \bar{g}^{\top} w \ge 0.
\end{equation}

However, when considering the task-specific gradients $\hat{g}_{i}$, we find that the only solution to $\hat{g}_i^\top w \ge 0, \forall i < t$ is $\hat{g}_i^\top w = 0, \forall i < t$. According to Equ.~\eqref{eq:spec_def}, for any gradient update $w$, we have $\sum_{i=1}^{t-1} \hat{g}_{i}^{\top} w = 0$ because $\sum_{i=1}^{t-1} \hat{g}_{i}=\mathbf{0}$. It means that unless $\hat{g}_i^\top w = 0, \forall i < t$, any other $w$ would result in $\hat{g}_i^\top w>0$ for some $i$ while $\hat{g}_i^\top w<0$ for some other $i$, i.e., the losses of some tasks will increase but the losses of others will decrease. To better preserve the knowledge from each old task, the only choice is to require the gradient update $w$ orthogonal to the column space of the task-specific gradients, i.e.,
\begin{equation}
\hat{G}^\top w = \mathbf{0} 
\end{equation}

For the ease of optimization, $\hat{G}^\top w = \mathbf{0}$ can be transformed according to the following lemma:

\begin{lemma}
    For a matrix $X\in\mathbb{R}^{N\times n}$, where $\rank(X)=r$ and $r\leq n<N$, there must exist $Y\in\mathbb{R}^{N\times r}$ and $A\in\mathbb{R}^{r\times n}$ that satisfy:
    \begin{equation}
        X=YA,
    \end{equation}
    where $Y^\top Y=\bf{I}$, $\rank(Y)=r$ and $\rank(A)=r$. Moreover, for a vector $v\in\mathbb{R}^N$, $X^\top v=\mathbf{0}$ if and only if $Y^\top v=\mathbf{0}$.
    \label{lemma:basis}
\end{lemma}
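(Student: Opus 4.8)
The plan is to realize the factorization $X=YA$ as a rank factorization whose left factor carries an orthonormal frame for the column space of $X$. First I would let $\hat{\mathcal{C}}$ denote the column space of $X$; since $\rank(X)=r$, this subspace of $\mathbb{R}^N$ has dimension exactly $r$. Choosing any orthonormal basis $y_1,\dots,y_r$ of $\hat{\mathcal{C}}$ (obtained, e.g., by Gram--Schmidt applied to the columns of $X$, or equivalently by reading off the left singular vectors from a reduced SVD $X=U\Sigma V^\top$ and setting $Y=U$), I assemble $Y=[y_1,\dots,y_r]\in\mathbb{R}^{N\times r}$, which satisfies $Y^\top Y=\mathbf{I}$ and $\rank(Y)=r$ by construction. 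Because each column of $X$ lies in $\hat{\mathcal{C}}=\mathrm{span}(Y)$, there is a unique coefficient matrix $A\in\mathbb{R}^{r\times n}$ with $X=YA$; concretely $A=Y^\top X$, since $YY^\top$ is the orthogonal projector onto $\hat{\mathcal{C}}$ and hence fixes every column of $X$, giving $Y(Y^\top X)=X$.

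Next I would pin down $\rank(A)=r$. Submultiplicativity of rank gives $r=\rank(X)=\rank(YA)\le\min(\rank(Y),\rank(A))\le\rank(A)$, while $A$ has only $r$ rows, so $\rank(A)\le r$; together these force $\rank(A)=r$. In particular $A^\top\in\mathbb{R}^{n\times r}$ has full column rank, so its kernel is trivial. This is the fact I will need for the converse below.

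For the equivalence, the easy direction is purely formal: if $Y^\top v=\mathbf{0}$ then $X^\top v=(YA)^\top v=A^\top(Y^\top v)=\mathbf{0}$. For the reverse direction, suppose $X^\top v=\mathbf{0}$. Writing $X^\top v=A^\top Y^\top v=\mathbf{0}$ and setting $z=Y^\top v\in\mathbb{R}^r$, the relation $A^\top z=\mathbf{0}$ together with the triviality of $\ker(A^\top)$ yields $z=\mathbf{0}$, i.e.\ $Y^\top v=\mathbf{0}$. Conceptually, both conditions simply assert that $v$ is orthogonal to the common column space $\hat{\mathcal{C}}=\mathrm{span}(X)=\mathrm{span}(Y)$, which is why they coincide.

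The routine parts, namely the existence of an orthonormal basis and the projector identity, are standard, so the one place that genuinely uses the hypotheses is the converse implication: it fails without $\rank(A)=r$, and that rank equality is precisely where $r\le n<N$ and $\rank(X)=r$ enter. I therefore expect the main obstacle to be making the rank bookkeeping airtight, in particular justifying $\ker(A^\top)=\{\mathbf{0}\}$ rather than the superficially similar but false claim that $A$ itself is injective (it need not be, since $A$ maps $\mathbb{R}^n$ to the lower-dimensional $\mathbb{R}^r$).
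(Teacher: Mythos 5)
Your proof is correct: constructing $Y$ as an orthonormal basis of the column space of $X$ (via Gram--Schmidt or reduced SVD), setting $A=Y^\top X$, and using the full column rank of $A^\top$ to get the converse implication is exactly the route the paper takes, since the paper's construction of $B$ from $\hat{G}$ is precisely Gram--Schmidt orthonormalization of the column space. The rank bookkeeping ($r=\rank(YA)\le\rank(A)\le r$) and the two directions of the equivalence are all airtight, so nothing is missing.
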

\begin{proof}
    See supplementary material.
\end{proof}

\begin{figure*}
\centering
\includegraphics[width=0.8\linewidth]{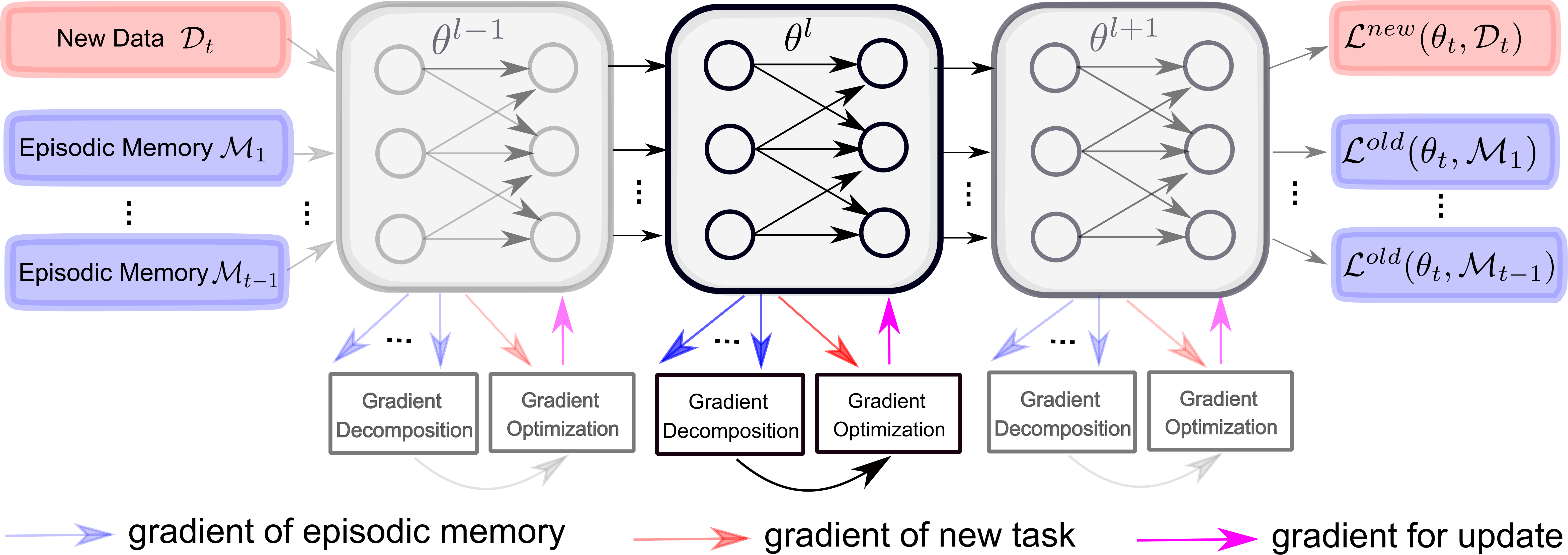}
\caption{\small{Illustration of layerwise gradient update strategy, which applies the proposed gradient decomposition and optimization to each layer independently.}}
\label{fig:layer_algo}
\vspace{-1em}
\end{figure*}

\noindent This lemma states that each column vector of an arbitrary matrix $X$ with rank $r$ can be represented by the linear combination of $r$ linearly independent column vectors. Therefore, the equality constraints  $\hat{G}^\top w = \mathbf{0}$  can be transferred into
\begin{equation}
  B^\top w = \mathbf{0} , 
\end{equation}
where $B=[b_1,\cdots,b_r]\in\mathbb{R}^{|\theta_t|\times r}$ is an orthogonal matrix that can be obtained by Gram-Schmidt process. The column space of $B$ equals to that of $\hat{G}$, i.e., $\hat{\mathcal{G}}$.

Based on the above analysis, the gradient update $w$  can be obtained by solving the following optimization problem:
\begin{equation} \label{eq:grad_optim_trans}
    \begin{split}
    \underset{w}{\min}\quad&\frac{1}{2}||w-g||^2_2, \\
    \text{s.t.} \quad &\bar{g}^{\top} w \ge 0, \\
    &B^\top w = \mathbf{0}
    \end{split}
\end{equation}
After transformation, we can solve the optimization problem in Equ.~\eqref{eq:grad_optim_trans} by Karush-Kuhn-Tucker condition~\cite{boyd2004convex}. The solution is as follows:
\begin{equation} \label{eq:new11}
w =
\left\{
\begin{array}{rcl}
      Pg, &  & \bar{g}^{\top}Pg \ge 0 \\
      Pg -  \frac{\bar{g}^{\top}Pg}{\bar{g}^{\top}P\bar{g}} P\bar{g},& &  \bar{g}^{\top}Pg  < 0
\end{array}
\right.   
\end{equation}
where $P = \mathbf{I} - BB^{\top}$, which is positive semidefinite. The schematic representation of gradient optimization is present in Figure~\ref{fig:algo}(b).

\noindent \textbf{PCA relaxation:} 
The equality constrains in the optimization problem in Equ. \eqref{eq:grad_optim_trans}, i.e., $B^\top w=\mathbf{0}$, enforce $w$ to be perpendicular to the whole residue space of old tasks. These constrains will become stronger as the number of old tasks increases, which may prevent $w$ from learning the new task. To address this problem, we propose a method to relax these constrains by replacing $B$ with its first few principal components, which can be obtained by principal component analysis. Although the above analysis defines the loss on the whole dataset, this analysis also applies to the loss on the data within each mini-batch, which replaces the whole data in Equ.~\eqref{eq:loss_def} by the data from a mini-batch. In each mini-batch, we update the parameters by $\theta_{t} \leftarrow \theta_{t} - \eta w$, where $\eta$ is the learning rate and $w$ is obtained from Equ.~\eqref{eq:new11}.

\subsection{Layerwise Gradient Update} \label{paragraph:LGU}

Traditional gradient-based CL algorithms, e.g. GEM, concatenate gradients from all layers together and the optimization is based on this concatenated gradient. The loss decrease of episodic memory $\mathcal{M}_i$ after updating parameters $\theta_t$ by $-\eta w$ is 
\begin{small}
\begin{equation*}
\Delta \mathcal{L}_1^{\old}(\theta_t, \mathcal{M}_i) \approx-\eta\sum_{l} \frac{\partial \mathcal{L}^{\old}(\theta_t, \mathcal{M}_i)}{\partial \theta_t} w,
\end{equation*}
\end{small}
where $w$ is obtained by Equ.~\eqref{eq:new11}. Considering $\frac{\partial \mathcal{L}^{\old}(\theta_t, \mathcal{M}_i)}{\partial \theta_t}$ is the summation of the shared gradient $\bar{g}$ and its task-specific gradient $\hat{g}$, the loss change of episodic memory $\mathcal{M}_i$ will be negative if $\bar{g}Pg \ge 0$ holds but will be zero if $\bar{g}Pg < 0$ after replacing $w$ with Equ~\eqref{eq:new11}. Specifically, 
\begin{small}
\begin{equation}
\Delta \mathcal{L}_1^{\old}(\theta_t, \mathcal{M}_i) = \left\{
\begin{array}{rcl}
      -\eta\bar{g}^\top Pg, &  & \bar{g}^{\top}Pg \ge 0 \\
      0, & &  \bar{g}^{\top}Pg  < 0
\end{array}
\right.   
\label{eq:concat}
\end{equation}
\end{small}
We notice that the condition $\bar{g}^{\top}Pg$ heavily depends on the elements with large magnitude in concatenated gradients as inner product is used. Since we have observed that the magnitude of gradients have a large variation among different layers from Figure~\ref{fig:magnitute_variation}, $\Delta \mathcal{L}_1^{\old}(\theta_t, \mathcal{M}_i)$ will be dominated by some layers where the magnitude of new gradient $g$, shared gradient $\bar{g}$ and task-specific gradients $\hat{G}$ is large.
\begin{figure} \label{fig:magnitude_dis}
    \centering
    \includegraphics[width=0.7\linewidth]{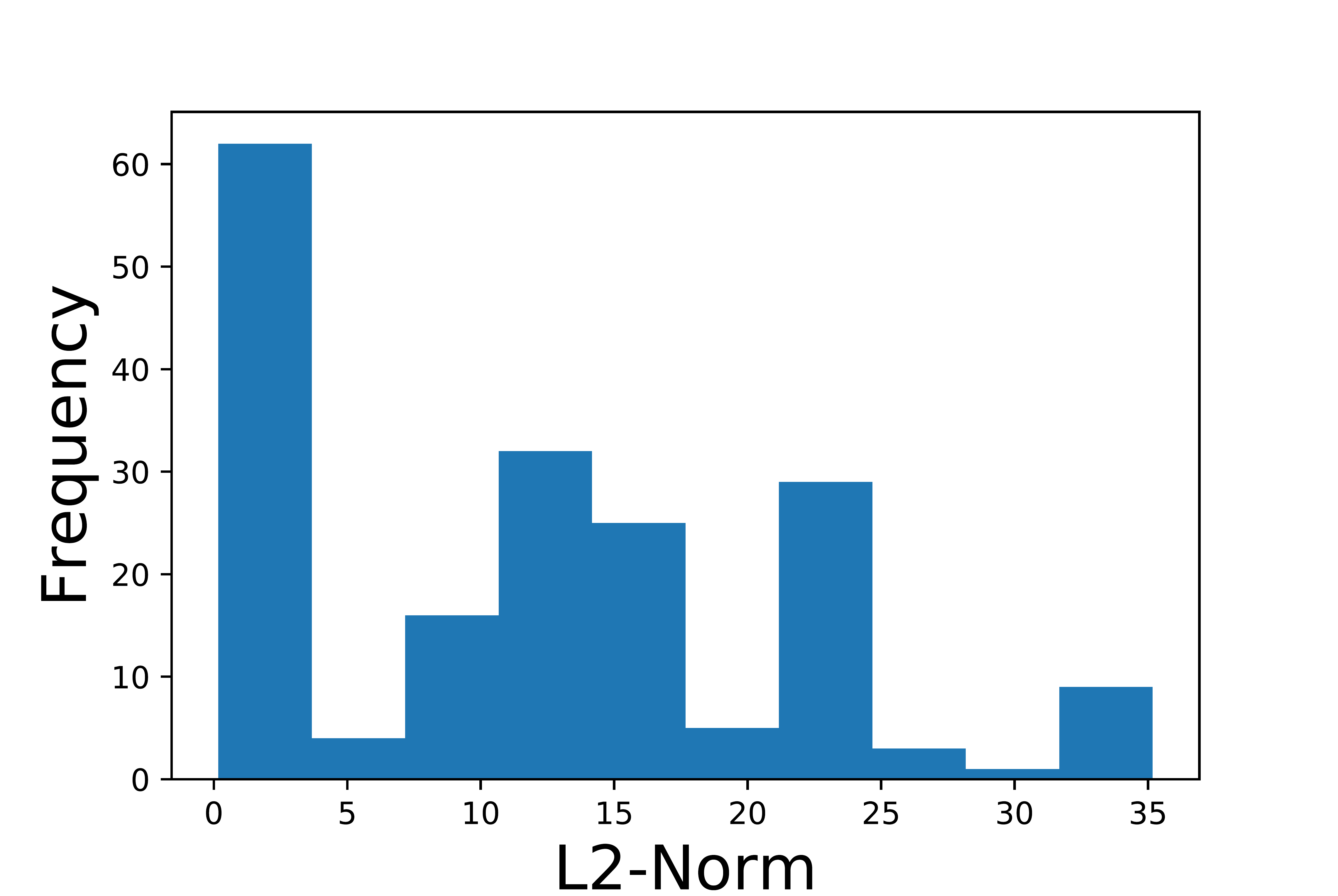}
    \caption{\begin{small}Histogram of the magnitude of gradients from different layers. We use L2-Norm to quantify the magnitude.\end{small} }
    \label{fig:magnitute_variation}
\end{figure}

For our proposed layerwise gradient update strategy, we replace concatenated $g, P$ in Equ.\eqref{eq:new11} with layer-specific gradients $g^{(l)}, P^{(l)}$, where $l$ is layer index.  Suppose the network is updated layerise, i.e., $\theta_t^{(l)}\leftarrow\theta_t^{(l)}-\eta w^{(l)}$, the loss change of episodic memory $\mathcal{M}_I$ is $-\eta\sum_{l=1}^L  \bar{g}^{(l)^{\top}} P^{(l)^\top}g^{(l)^\top}$, where $L$ is the number of layers. For layer $l$ that satisfies $\bar{g}^{(l)^\top} P^{(l)^\top} g^{(l)}\ge 0$, its contribution to $\Delta \mathcal{L}_2^{\old}$ is $-\eta \bar{g}^{(l)^\top}P^{(l)^\top}g^{(l)} \le 0$. For layer $l$ that satisfies $\bar{g}^{(l)^\top}P^{(l)^\top}g^{(l)} < 0$, its contribution to $\Delta \mathcal{L}_2^{\old}$ is zero. Therefore, the total loss change of episodic memory $\mathcal{M}_{i}$, $\Delta \mathcal{L}_2^{\old}$, can be presented as

\begin{equation} \label{eq:layerloss}
\Delta \mathcal{L}_2^{\old} = -\eta\sum_{l\in W^+}\bar{g}^{(l)^\top}P^{(l)^\top}g^{(l)} < 0
\end{equation}
where $\{W^+\}$ is the set of layer index whose condition $\bar{g}^{(l)^\top}P^{(l)^\top}g^{(l)}\ge 0$ is satisfied. We notice that since the gradient magnitude will not have a large variation within one layer, $\Delta \mathcal{L}_2^{\old} $ calculated layerwise is not dominated by a few elements with large magnitudes in $g^{(l)}$ and $P^{(l)}$.

Comparing Equ.~\eqref{eq:concat} with Equ.~\eqref{eq:layerloss}, the layerwise method has two advantages: first, instead of heavily depending on the gradients with large magnitude in the concatenating method, the layerwise method treats each layer equally by letting each layer determine its own condition. 

Second, as each layer determines its own condition, $\bar{g}^{(l)^\top}P^{(l)^\top}g^{(l)}\geq 0$ holds for some layers even though $\bar{g}^{\top}P^{\top}g<0$, which may reduce the loss more efficiently.

\subsection{Comparisons with GEM and A-GEM}
Since GEM and A-GEM are two closest algorithms to our method, we would make more explicit comparisons with them to clarify our innovations. 

\noindent \textbf{Similarity.} 
All three methods tackle ``\emph{catastrophic forgetting}'' by modifying the gradients of new task under the constraints related to the gradients of old tasks.

\noindent \textbf{Difference.} 
They are different in three aspects. (1) They differ by the constraints: GEM considers the old tasks to be independent by enforcing independent losses for old tasks. A-GEM only uses a global loss for all old tasks. As admitted in \cite{chaudhry2018efficient}, due to the use of single global loss, A-GEM may lead to the performance drop of some tasks. Our task-specific constraints can better exploit the inter-task dependency, as analyzed in Section \ref{paragraph:GO}. 
(2) They differ in the efficiency for solving the optimization problem. GEM has to solve QP online, while A-GEM and ours have closed-form solutions, which is computationally efficient. (3) Our proposed Layerwise gradient update and PCA relaxation are not investigated in GEM or A-GEM. The advantage and motivation of our contribution on layerwise gradient update are introduced in Section~\ref{paragraph:LGU}.

\section{Experiments}
\subsection{Setup}
\noindent \textbf{Datasets.} We employ four standard benchmark datasets to evaluate the proposed continual learning framework, including MNIST Permutation, the split CIFAR10, the split CIFAR100 and the split \emph{tiny}ImageNet datasets. The MNIST Permutation is a synthetic dataset based on MNIST~\cite{lecun1998mnist}, where all pixels of an image are permuted differently but coherently in each task. The split CIFAR10/100 dataset is generated from CIFAR10/100~\cite{krizhevsky2009learning}. The split \emph{tiny}ImageNet is derived from \emph{tiny}ImageNet~\cite{le2015tiny}. These datasets equally divide their target classes into multiple subsets, where each subset corresponds to individual task. We considered $T=20$ tasks for MNIST Permutation, Split CIFAR100 and Split \emph{tiny}ImageNet, and $T=5$ for Split CIFAR 10. For each dataset, each task contained samples from a disjoint subset of classes, except that on MNIST two consecutive tasks contained disjoint samples from the same class. The evaluation was performed on the test partition of each dataset.

\noindent \textbf{Implementation Details}.
We trained our models following the description in GEM~\cite{lopez2017gradient}. On the MNIST tasks, we used fully-connected neural networks with two hidden layers of 100 ReLU units. On the CIFAR10/CIFAR100 tasks, we used a smaller version of ResNet18~\cite{he2016deep}, which has three times less number of feature maps for all layers than the standard ResNet18. On the \emph{tiny}ImageNet, we empolyed the standard ResNet18. We trained all networks using the plain stochastic gradient descent optimizer with mini-batches of 10 samples for the new task on MNIST Permutation, Split Cifar10/Cifar100 and 20 samples on \emph{tiny}ImageNet. The batch size for each episodic memory task is 20. We followed the same one/three epoch(s) settings as GEM, where the samples in the new task were only trained once/three times but samples in the memory could be trained for several times. The learning rate was 0.1 for all datasets. The pseudo code of training is presented in Algorithm~\ref{Algor:train}.

\begin{algorithm}
\caption{Training Procedures of Step $t$}
\label{Algor:train}
\footnotesize
\begin{algorithmic}[1]
\Require
    \Statex $\{\mathcal{M}_{i}\}_i^{t-1}$: episodic memory of old tasks
    \Statex $\mathcal{D}_t$: new data of current step $t$
    \Statex $f$: network architecture
    \Statex $\theta_{t-1}$: network paramters trained by last step $t-1$
    \Statex $bs_\new$: batch size of the new task
    \Statex $bs_\old$: batch size of the old tasks
    \Statex $\eta$: learning rate
\Ensure
    \Statex $\mathcal{M}_t$: episodic memory of current step $t$
    \Statex $\theta_{t}$: network parameters trained by current step $t$
\State Initialization: $\theta_t=\theta_{t-1}$
\For{$epoch=1:epoch\_stop$}
\For{$iter=1:iter\_stop$}
    \State Sample a batch $s_\new$ of size $bs_\new$ from $\mathcal{D}_t$
    \State $g \leftarrow \nabla \mathcal{L}(f(x;\theta_t), y)$ for $(x,y)\in s_\new$
    \For{$i=1:t-1$}
        \State Sample a batch $s_i$ of size $bs_\old$ from $\mathcal{M}_i$
        \State $g^i \leftarrow \nabla \mathcal{L}(f(x;\theta_t), y)$ for $(x,y)\in s_i$ 
    \EndFor
    \State Compute shared component $\bar{g}$ by Equ.~\eqref{eq:shared_def}
    \State Compute specific components $\hat{G}$ by Equ.~\eqref{eq:spec_def}
    \If{PCA Relaxation}
        \State Compute the principal components $B$ of $\hat{G}$ by PCA
    \Else
        \State Compute the orthogonal basis $B$ of $\hat{G}$ by Schmidt orthogonalization
    \EndIf
    \State Compute the gradient for update $w$ by Equ.~\eqref{eq:new11}
    \State Update $\theta_t$: $\theta_t\leftarrow\theta_t-\eta w$
\EndFor
\EndFor
\State $\mathcal{M}_t \leftarrow$ a subset of $\mathcal{D}_t$
\end{algorithmic}
\end{algorithm}

\noindent \textbf{Evaluation Metrics.} Following~\cite{chaudhry2018riemannian,lopez2017gradient,chaudhry2018efficient}, we measure the performance by mean classification accuracy (ACC) and  Backward Transfer (BWT). BWT is defined as the change of average accuracy for old tasks after learning a new task. A positive value of BWT means that learning new tasks can benefit old tasks, while a negative value indicates that learning new task degrades the performance of old tasks.

\subsection{Empirical analysis}
To investigate the contribution of new components in our proposed method, we incrementally evaluate each of them on MNIST Permutation, Split CIFAR10, Split CIFAR100 and Split \emph{tiny}Imagenet. Denote \textbf{SCC} as the \textbf{S}hared \textbf{C}omponent \textbf{C}onstraint, \textbf{TSCC} as \textbf{T}ask-\textbf{S}pecific \textbf{C}ompoent \textbf{C}onstraint, \textbf{LGU} as \textbf{L}ayer\-wise \textbf{G}radient \textbf{U}pdate, and \textbf{PCA} as \textbf{P}rincipal \textbf{C}ompoent \textbf{A}nalysis. We choose a single predictor fine-tuned across all tasks as baseline. Seven variants are then constructed on top of baseline: (a) baseline; (b) baseline+SCC; (c) baseline+SCC+LGU; (d) baseline+SCC+TSCC; (e) baseline+SCC+TSCC+PCA; (f) baseline+SCC+LGU+TSCC; (g) baseline+SCC+LGU+TSCC+PCA. Table~\ref{tab:results_components} presents the performance of all variants. All reported results are tested on the task-incremental setting.


\begin{table}[]
\centering \footnotesize
\renewcommand\arraystretch{1.1}    
\begin{tabular}{c||ccccccc}
\hline
Method       & (a)  & (b)       & (c)       & (d)       & (e)       & (f)       & (g)       \\ \hline
SCC          &      & \checkmark & \checkmark & \checkmark & \checkmark & \checkmark & \checkmark \\ \hline
TSCC         &      &           &           & \checkmark & \checkmark & \checkmark & \checkmark \\ \hline
LGU          &      &           & \checkmark &           &           & \checkmark & \checkmark \\ \hline
PCA          &      &           &           &           & \checkmark &           & \checkmark \\ \hline \hline
MNIST        & 57.4 & 76.4      & 82.0      & 81.9      & 81.9      & 82.9      & 82.9      \\ \hline
CIFAR10      & 58.7 & 75.8      & 76.2      & 76.8      & 77.5      & 77.3      & 79.0      \\ \hline
CIFAR100     & 54.7 & 65.3      & 67.3      & 67.8      & 68.2      & 69.1      & 69.3      \\ \hline
\emph{tiny}Imagenet & 21.8 & 33.5      & 36.3      & 36.4      & 38.1      & 37.6      & 38.3      \\ \hline
\end{tabular}
\caption{
\small Empirical analysis of the proposed components on MNIST Permutation, Split CIFAR10, Split CIFAR100 and Split \emph{tiny}Imagenet datasets in one epoch setting. (a-f) denotes different methods.} 
\label{tab:results_components}  
\end{table}

\noindent \textbf{Effectiveness of Gradient Decomposition.} 
We compare Methods (a,b,d) in Table~\ref{tab:results_components} to illustrate to what extent the shared gradient constraint and task-specific gradients constraints contribute to the performance. Split CIFAR100 and Split \emph{tiny}Imagenet are taken as examples for analysis. We observe that the shared gradient constraint plays a significant role in continual learning task as only applying the shared gradient constraint improves the performance of the model from $54.7\%$ to $65.3\%$ on Split CIFAR100 and from $21.8\%$ to $33.5\%$ on Split \emph{tiny}ImageNet at the final continual step. The task-specific constraints can bring additional gain for the model. It improves the mean accuracy from $65.3\%$ to $67.8\%$ on Split CIFAR100 and from $33.5\%$ to $36.4\%$ on Split \emph{tiny}ImageNet at the final continual step. The more enhancement of performance on both two datasets verifies that the shared gradient constraint contains most of the knowledge for old tasks and it is the most important constraint for continual learning. 

\noindent \textbf{Effectiveness of PCA Relaxation.} To show PCA relaxation can empower our method a better trade-off between learning and memorizing, we evaluate BWT and ACC under different value of K on both Split CIFAR100 and Split \emph{tiny}ImageNet datasets. Here we denote K as the rank of task-specific matrix after relaxation.
\begin{figure}{}
\centering
\includegraphics[width=0.9\linewidth]{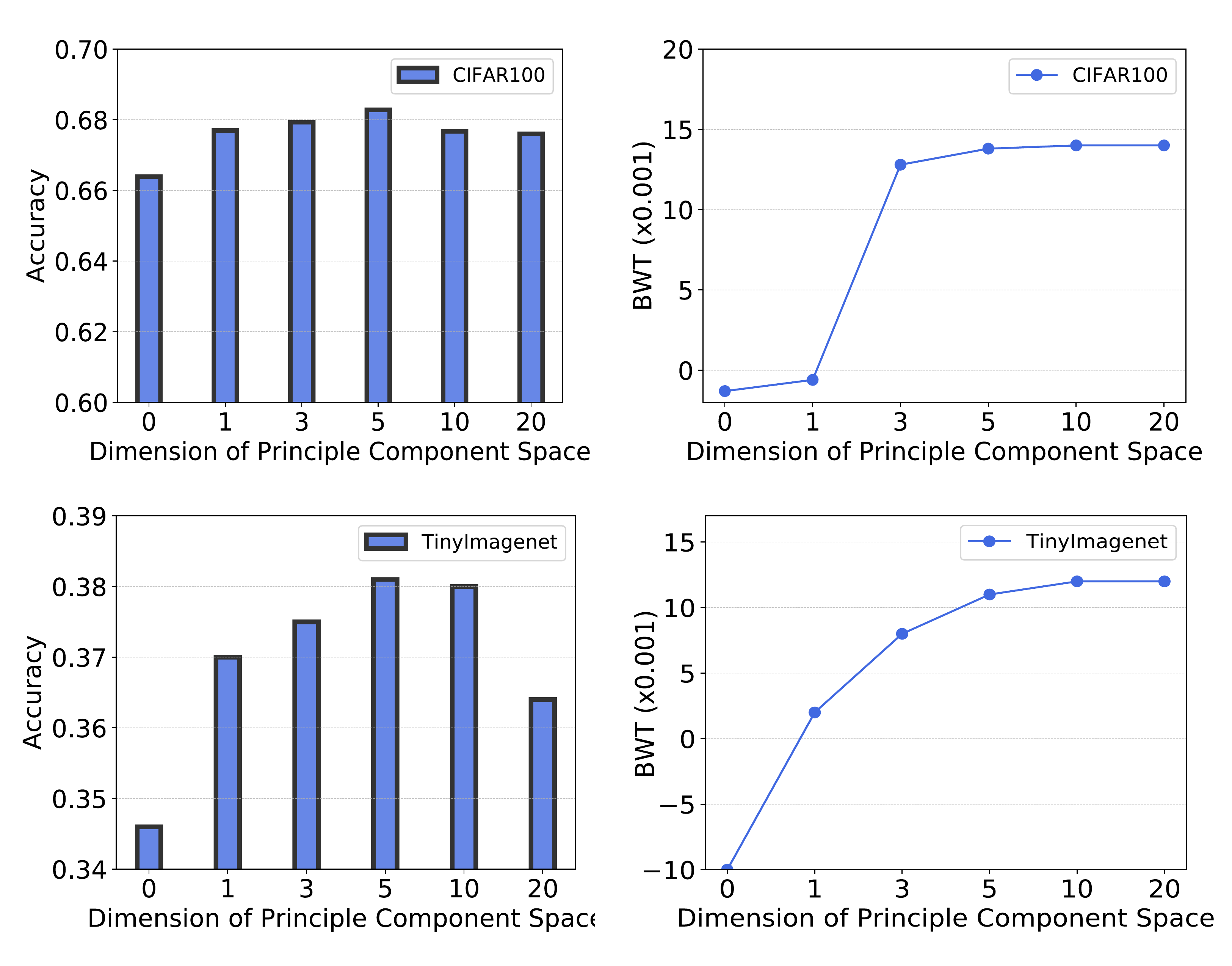}
\caption{\small{Average Accuracy and Backward Transfer with different dimension of principal component space. Top row: ACC and BWT on CIFAR 100. Bottom row: ACC and BWT on \emph{tiny}ImageNet.}}
\label{fig:exp_AB2}
\vspace{-1em}
\end{figure}
Figure~\ref{fig:exp_AB2} shows that as K increases, ACC increases initially but decreases afterwards, while BWT keeps increasing. The continued increase of BWT is expected, as more knowledge from the old tasks is preserved as K increases. However, ACC decreases after K $=5$, indicating that although the performance on the old tasks benefits from a large K, the ability to learn new task would be degraded if K is too large.

\noindent \textbf{Effectiveness of Layer-wise Gradient Update.}
We propose the layer-wise gradient update strategy where the constraint is imposed by individual network layer. We compare Methods (b,c) in Table~\ref{tab:results_components} to manifest its general effectiveness. The results show that with layer-wise gradient update strategy, Average Accuracy ACC is higher than those by the method without layer-wise gradient update strategy. In addition, from method (b-e) in Table~\ref{tab:results_components}, we can conclude that Gradient decomposition (SCC, TSCC) and Layerwise Gradient Update (LGU) are orthogonal and cumulative. To further illustrate the effectiveness of layerwise gradient update strategy, we leverage layerwise gradient update on top of GEM, A-GEM and our proposed method (d) and present the results in Table~\ref{tab:layerwise_compare}. By comparing GEM, A-GEM, Method (d) with their layerwise gradient update counterparts, we observe that both ACC and Backward Transfer (BWT) of algorithms with layerwise gradient update are systematically higher than those without the strategy. The results empirically prove the general effectiveness of layerwise gradient update on GEM Families.

\begin{table}[t]
\centering
\footnotesize
\begin{tabular}{c|c|cc|cc|cc}
\hline
\multirow{2}{*}{\scriptsize Datasets}     & \multirow{2}{*}{\scriptsize LGU} & \multicolumn{2}{c|}{GEM}       & \multicolumn{2}{c|}{A-GEM}      & \multicolumn{2}{c}{Method (d)} \\ \cline{3-8} 
                              &                           & \scriptsize ACC           & \scriptsize BWT            & \scriptsize ACC           & \scriptsize BWT            & \scriptsize ACC            & \scriptsize BWT            \\ \hline \hline
\multirow{2}{*}{\scriptsize MNIST}        &  x                        & 81.9          & 0.017          & 76.4          & 0.005         & 81.9           & 0.017          \\
                              & \checkmark                & \textbf{82.7} & \textbf{0.045} & \textbf{77.1} & \textbf{-0.008} & \textbf{82.9}  & \textbf{0.045} \\ \hline
\multirow{2}{*}{\scriptsize CIFAR10}      &  x                        & 76.8          & 0.014          & 75.8          & 0.007          & 76.8           & 0.015          \\
                              & \checkmark                & \textbf{77.4} & \textbf{0.035} & \textbf{76.5} & \textbf{0.015} & \textbf{77.3}  & \textbf{0.034} \\ \hline
\multirow{2}{*}{\scriptsize CIFAR100}     &  x                        & 65.8          & -0.005         & 65.3          & -0.005         & 67.8           & -0.001         \\
                              & \checkmark                & \textbf{67.9} & \textbf{0.003} & \textbf{66.8} & \textbf{0.002 }           & \textbf{69.1} & \textbf{0.006}          \\ \hline
\multirow{2}{*}{\scriptsize \emph{tiny}Imagenet} & x                         & 34.2          & -0.001         & 33.5          & -0.003         & 36.4           & 0.008          \\
                              & \checkmark                & \textbf{35.8} & \textbf{0.006} & \textbf{35.6} & \textbf{0.005} & \textbf{37.6}  & \textbf{0.018} \\ \hline
\end{tabular}
\caption{
\small Empirical Analysis of layerwise gradient update on GEM, A-GEM and Method (d).} 
\label{tab:layerwise_compare}
\end{table}


\begin{table*}[t]  
  \centering \small
 \resizebox{1\hsize}{21mm}{
 \setlength{\tabcolsep}{2.7mm}
 {
\begin{tabular}{c|cc|cc|cc|cc}
\hline
\multirow{2}{*}{Method} & \multicolumn{2}{c|}{MNIST Permutation} & \multicolumn{2}{c|}{Split CIFAR10} & \multicolumn{2}{c|}{Split CIFAR100} & \multicolumn{2}{c}{Split \emph{tiny}ImageNet} \\ \cline{2-9} 
                        & 1 epoch           & 3 epochs           & 1 epoch      & 3 epochs      & 1 epoch       & 3 epochs      & 1 epoch   & 3 epochs              \\ \hline
Single~\cite{robbins1951stochastic}                  & $57.4\pm0.7$              & $53.0\pm0.6$               & $57.7 \pm 0.3$         & $68.4 \pm 0.8$          & $54.7\pm1.0$          & $55.4\pm0.9$          & $25.7\pm0.8$      & $28.7\pm0.5$                     \\
Independent~\cite{lopez2017gradient}             & $37.0\pm0.6$              & $61.3\pm0.4$               & $43.3\pm0.8$         & $71.9\pm0.7$          & $43.3\pm0.3$          & $53.5\pm0.5$          & $26.9\pm0.6$      &  $35.9\pm0.6$                     \\
Multimodel~\cite{lopez2017gradient}              & $71.9\pm0.5$              & $59.4\pm0.5$               & -            & -             & -             & -             & -         & -                     \\
EWC~\cite{kirkpatrick2017overcoming}                     & $57.5\pm0.6$              & $61.9\pm0.5$               & $59.8\pm0.4$         & $67.8\pm0.5$          & $48.3\pm0.6$          & $56.9\pm0.6$          & $22.2\pm0.4$      & $25. 4\pm0.3$  \\
iCARL~\cite{rebuffi2017icarl}                   &  -                &  -                 & $63.9\pm0.5$         & $66.5\pm 0.6$          & $55.7\pm0.8$          & $55.8\pm0.4$          & $26.5\pm0.6$      & $28.9\pm0.4$                      \\
GEM~\cite{lopez2017gradient}                     & $81.9\pm0.4$              & $84.1\pm0.4$               & $76.8\pm0.3$         & $80.8\pm0.3$          & $65.8\pm0.5$          & $69.6\pm0.4$          & $34.2\pm0.5$      & $37.3\pm 0.4$                       \\
A-GEM~\cite{chaudhry2018efficient}                   & $76.4\pm 0.3$              & $82.9\pm0.5$               & $75.8\pm0.6$         & $81.0\pm0.4$          & $65.3\pm0.4$          & $69.1\pm0.5$          & $33.5\pm0.3$      & $36.9\pm0.3$                      \\
S-GEM~\cite{chaudhry2018efficient}                   & $76.3\pm0.4$              & $82.3\pm0.4$               & $75.9\pm0.3$         & $81.5\pm0.3$          & $64.2\pm0.3$          & $68.8\pm0.2$          & $33.8\pm0.4$      & $36.1\pm0.3$    
      \\ 
tinyER\dag~\cite{chaudhry2019tiny}                   & $82.5\pm0.8$              & -               & $77.5\pm0.5$         & -          & $68.3\pm0.7$          & -          & $36.5\pm0.3$      &     -
      \\
GDumb\dag~\cite{prabhu2020gdumb}                   & $73.4\pm0.4$              & -               & $74.2\pm0.3$         &  -         & $60.3\pm0.9$          & -          & $32.1\pm0.3$      &     -
      \\  \hline
Ours                    & $\mathbf{82.9\pm 0.3}$      & $\mathbf{84.3\pm 0.3}$      & $\mathbf{79.0 \pm 0.4}$& $\mathbf{81.6\pm 0.5}$         &$\mathbf{69.3\pm 0.4}$  &$\mathbf{71.0\pm0.3}$  & $\mathbf{38.3\pm0.4}$ & $\mathbf{39.5\pm0.3}$                     \\ \hline 
\end{tabular}  }}
    \caption{
    \small Comparison of our proposed method with state-of-the-art methods on MNIST Permutation, Split CIFAR10, Split CIFAR100 and Split \emph{tiny}ImageNet in Online Task Incremental setting. All the experiments are conducted in three runs. \dag indicates that we follow the training schedule of Online Task Incremental Disjoint setting in ~\cite{chaudhry2019tiny} and ~\cite{prabhu2020gdumb} respectively, which means every new sample can only be trained once.} 
     \label{tab:state_of_the_art}  
\vspace{-1em}
\end{table*}

\subsection{Main results}
Our main results except MNIST are tested on the online task-incremental setting while MNIST is tested on the data-incremental setting.
We compare our proposed method with several existing and state-of-the-art methods, including Single~\cite{robbins1951stochastic}, Independent\footnote{For fair comparison, we follow the setting in GEM. In GEM, the channels of each feature map are divided by the number of tasks. As such, the model size in Independent method equals to the model size in other methods.}~\cite{lopez2017gradient}, Multimodel~\cite{lopez2017gradient}, EWC~\cite{kirkpatrick2017overcoming}, iCARL~\cite{rebuffi2017icarl}, GEM~\cite{lopez2017gradient}, A-GEM~\cite{chaudhry2018efficient} and S-GEM~\cite{chaudhry2018efficient}. For fair comparison, the size of each episodic memory is also 256 for GEM, A-GEM and S-GEM. The results are reported in Tab.~\ref{tab:state_of_the_art}. Our method achieves state-of-the-art results on all four datasets in both one epoch and three epochs settings.

\noindent \textbf{MNIST Permutation.} 
Our method achieves $82.9\%$ for one epoch setting and $84.3\%$ for three epochs setting, which outperforms the competitive GEM~\cite{lopez2017gradient} by $1.0\%$ and $0.2\%$, respectively. We only marginally outperforms GEM at the three epochs setting because the network has been saturated.

\noindent \textbf{Split CIFAR10.} On Split CIFAR10, our method significantly outperforms GEM by $2.2\%$ in one epoch setting. In three epochs setting, our method achieves similar results with S-GEM where we only improve by $0.1\%$ on Split CIFAR10, which is within statistical error. The marginal improvement can be explained by that the Split Cifar10 is oversimple and various medels can touch the upper bound accuracy of each task.

\noindent \textbf{Split CIFAR100.} On Split CIFAR100, we improve the ACC from $65.8\%$ to $69.3\%$ by $3.5\%$ in one epoch setting. In three epochs setting, we improve ACC from $69.6\%$ to $71.0\%$ by $1.4\%$.

\noindent \textbf{Split \emph{tiny}ImageNet.}
As illustrated in Tab.~\ref{tab:state_of_the_art}, we significantly improve the benchmark by $3.9\%$, which is from $34.2\%$ to $38.1\%$ in one epoch setting. In three epoches setting, our method also outperforms GEM by $2.6\%$.

\subsection{Results for the class-incremental setting}
Different from task-incremental continual learning setting mainly discussed in this paper, class-incremental continual learning setting do not provide task identity in the test data~\cite{hsu2018re,van2019three}. In such setting, task boundaries naturally formed in the continual training set ought to be broken because we need classify classes from different continual tasks. In order to break the task boundary, we first stack all episodic memories as one replay buffer. When training, we randomly split the replay buffer into $N$ sub-replay buffers. The gradient decomposition is implemented on such $N$ sub-replay buffers instead of episodic memories. We assess our method on Split CIFAR100 dataset for consistency with other approaches. We employ Resnet34 as our backbone and train the network for 200 epochs for each continual step. The initial learning rate is 0.1 and decays when the epoch equals 80, 160. The size of replay buffer is restricted to be 2000 images and the memory update strategy is the same as that in ~\cite{rebuffi2017icarl,castro2018end} for fair comparison. 

We compare our approach with other competing methods, including LWF~\cite{li2017learning}, iCarl~\cite{rebuffi2017icarl}, DR~\cite{hou2018lifelong}, End2End~\cite{castro2018end}, LUCR\footnote{LUCR is short for Learning a Unified Classifier via Rebalancing}~\cite{hou2019learning}, GD~\cite{lee2019overcoming}, MUC~\cite{fini2020online} and TPL~\cite{tao2020topology}. We fix $N=5$ when implementing the experiments. The comparisons with other competing methods are presented in Table~\ref{tab:state_of_the_art_class}. As illustrated in Table~\ref{tab:state_of_the_art_class}, our method outperforms GEM by $3\%$ which shows the consistency improvement with results in task-incremental scenario. We also achieve the state-of-the-art results which are the same MUC~\cite{fini2020online}.

\begin{table}[t]
\centering
\scriptsize
\begin{tabular}{|c|c|c||c|c|c|}
\hline
Method  & Ref     & Results & Method & Ref     & Results \\ \hline \hline
LwF$^{\rm a}$   & PAMI'~17 & 58.4    & GD     & ICCV'~19 & 62.1    \\ \hline
iCarl   & CVPR'~17 & 58.7    & A-GEM$^{\rm b}$  & ICLR'~19 & 60.43   \\ \hline
DR$^{\rm a}$     & ECCV'~18 & 59.1    & S-GEM$^{\rm b}$  & ICLR'~19 & 59.98   \\ \hline
End2End$^{\rm a}$ & ECCV'~18 & 60.2    & GEM$^{\rm b}$    & NIPS'~17 & 61.9    \\ \hline
LUCR     & CVPR'~19 & 61.2    & TPL   & ECCV'20        & \textbf{65.3}    \\ \hline
MUC & ECCV'~20 & 64.7 & Ours   & -        & \textbf{65.3} \\ \hline
\end{tabular}
\scriptsize{$^{\rm a}$ LwF and DR were adapted for class-incremental setting. Results of LwF, DR and End2End were reported in \cite{lee2019overcoming}.}\\
{$^{\rm b}$ GEM, A-GEM, S-GEM are re-implemented by ourselves.}
\vspace{-1em}
\caption{
\small Comparison of our proposed method with state-of-the-art methods on Split CIFAR100 in the class-incremental setting.} 
\vspace{-1em}
\label{tab:state_of_the_art_class} 
\end{table}

\section{Conclusion}
In this work, we presented a novel continual learning framework including gradient decomposition, gradient optimization and layerwise gradient update. The gradients of episodic memory are decomposed into the shared gradient and task-specific gradients. Our framework encourages the consistency between the gradient for update and the shared gradient, and enforces the gradient for update orthogonal to the space spanned by task-specific gradients. The former keeps the common knowledge,  while the latter can be relaxed by PCA to preserve the task-specific knowledge. We observe that optimizing the gradient update layerwise can further help the model remember the old tasks.  Our method significantly outperforms current state-of-the-art on extensive benchmarks.

\section{Acknowledgement}
Wanli Ouyang was supported by the SenseTime, Australian Research Council Grant DP200103223, and Australian Medical Research Future Fund MRFAI000085.

{\small
\bibliographystyle{ieee}
\bibliography{egbib}

\begin{thebibliography}{10}\itemsep=-1pt

\bibitem{aljundi2018memory}
R.~Aljundi, F.~Babiloni, M.~Elhoseiny, M.~Rohrbach, and T.~Tuytelaars.
\newblock Memory aware synapses: Learning what (not) to forget.
\newblock In {\em Proceedings of the European Conference on Computer Vision
  (ECCV)}, pages 139--154, 2018.

\bibitem{bordes2009sgd}
A.~Bordes, L.~Bottou, and P.~Gallinari.
\newblock Sgd-qn: Careful quasi-newton stochastic gradient descent.
\newblock {\em Journal of Machine Learning Research}, 10(Jul):1737--1754, 2009.

\bibitem{bottou2010large}
L.~Bottou.
\newblock Large-scale machine learning with stochastic gradient descent.
\newblock In {\em Proceedings of COMPSTAT'2010}, pages 177--186. Springer,
  2010.

\bibitem{boyd2004convex}
S.~Boyd and L.~Vandenberghe.
\newblock {\em Convex optimization}.
\newblock Cambridge university press, 2004.

\bibitem{castro2018end}
F.~M. Castro, M.~J. Mar{\'\i}n-Jim{\'e}nez, N.~Guil, C.~Schmid, and K.~Alahari.
\newblock End-to-end incremental learning.
\newblock In {\em Proceedings of the European Conference on Computer Vision
  (ECCV)}, pages 233--248, 2018.

\bibitem{chaudhry2018riemannian}
A.~Chaudhry, P.~K. Dokania, T.~Ajanthan, and P.~H. Torr.
\newblock Riemannian walk for incremental learning: Understanding forgetting
  and intransigence.
\newblock In {\em Proceedings of the European Conference on Computer Vision
  (ECCV)}, pages 532--547, 2018.

\bibitem{chaudhry2018efficient}
A.~Chaudhry, M.~Ranzato, M.~Rohrbach, and M.~Elhoseiny.
\newblock Efficient lifelong learning with a-gem.
\newblock {\em arXiv preprint arXiv:1812.00420}, 2018.

\bibitem{chaudhry2019tiny}
A.~Chaudhry, M.~Rohrbach, M.~Elhoseiny, T.~Ajanthan, P.~K. Dokania, P.~H. Torr,
  and M.~Ranzato.
\newblock On tiny episodic memories in continual learning.
\newblock {\em arXiv preprint arXiv:1902.10486}, 2019.

\bibitem{dhar2019learning}
P.~Dhar, R.~V. Singh, K.-C. Peng, Z.~Wu, and R.~Chellappa.
\newblock Learning without memorizing.
\newblock In {\em Proceedings of the IEEE Conference on Computer Vision and
  Pattern Recognition}, pages 5138--5146, 2019.

\bibitem{duchi2011adaptive}
J.~Duchi, E.~Hazan, and Y.~Singer.
\newblock Adaptive subgradient methods for online learning and stochastic
  optimization.
\newblock {\em Journal of Machine Learning Research}, 12(Jul):2121--2159, 2011.

\bibitem{fini2020online}
E.~Fini, S.~Lathuili{\`e}re, E.~Sangineto, M.~Nabi, and E.~Ricci.
\newblock Online continual learning under extreme memory constraints.
\newblock In {\em European Conference on Computer Vision}, pages 720--735.
  Springer, 2020.

\bibitem{flennerhag2019meta}
S.~Flennerhag, A.~A. Rusu, R.~Pascanu, H.~Yin, and R.~Hadsell.
\newblock Meta-learning with warped gradient descent.
\newblock {\em arXiv preprint arXiv:1909.00025}, 2019.

\bibitem{ge2020mutual}
Y.~Ge, D.~Chen, and H.~Li.
\newblock Mutual mean-teaching: Pseudo label refinery for unsupervised domain
  adaptation on person re-identification.
\newblock In {\em International Conference on Learning Representations}, 2020.

\bibitem{ge2020selfpaced}
Y.~Ge, F.~Zhu, D.~Chen, R.~Zhao, and H.~Li.
\newblock Self-paced contrastive learning with hybrid memory for domain
  adaptive object re-id.
\newblock In {\em Advances in Neural Information Processing Systems}, 2020.

\bibitem{he2016deep}
K.~He, X.~Zhang, S.~Ren, and J.~Sun.
\newblock Deep residual learning for image recognition.
\newblock In {\em Proceedings of the IEEE conference on computer vision and
  pattern recognition}, pages 770--778, 2016.

\bibitem{hinton2015distilling}
G.~Hinton, O.~Vinyals, and J.~Dean.
\newblock Distilling the knowledge in a neural network.
\newblock {\em arXiv preprint arXiv:1503.02531}, 2015.

\bibitem{hou2018lifelong}
S.~Hou, X.~Pan, C.~Change~Loy, Z.~Wang, and D.~Lin.
\newblock Lifelong learning via progressive distillation and retrospection.
\newblock In {\em Proceedings of the European Conference on Computer Vision
  (ECCV)}, pages 437--452, 2018.

\bibitem{hou2019learning}
S.~Hou, X.~Pan, C.~C. Loy, Z.~Wang, and D.~Lin.
\newblock Learning a unified classifier incrementally via rebalancing.
\newblock In {\em Proceedings of the IEEE Conference on Computer Vision and
  Pattern Recognition}, pages 831--839, 2019.

\bibitem{hsu2018re}
Y.-C. Hsu, Y.-C. Liu, A.~Ramasamy, and Z.~Kira.
\newblock Re-evaluating continual learning scenarios: A categorization and case
  for strong baselines.
\newblock {\em arXiv preprint arXiv:1810.12488}, 2018.

\bibitem{jung2016less}
H.~Jung, J.~Ju, M.~Jung, and J.~Kim.
\newblock Less-forgetting learning in deep neural networks.
\newblock {\em arXiv preprint arXiv:1607.00122}, 2016.

\bibitem{kingma2014adam}
D.~P. Kingma and J.~Ba.
\newblock Adam: A method for stochastic optimization.
\newblock {\em arXiv preprint arXiv:1412.6980}, 2014.

\bibitem{kirkpatrick2017overcoming}
J.~Kirkpatrick, R.~Pascanu, N.~Rabinowitz, J.~Veness, G.~Desjardins, A.~A.
  Rusu, K.~Milan, J.~Quan, T.~Ramalho, A.~Grabska-Barwinska, et~al.
\newblock Overcoming catastrophic forgetting in neural networks.
\newblock {\em Proceedings of the national academy of sciences},
  114(13):3521--3526, 2017.

\bibitem{krizhevsky2009learning}
A.~Krizhevsky, G.~Hinton, et~al.
\newblock Learning multiple layers of features from tiny images.
\newblock Technical report, Citeseer, 2009.

\bibitem{le2015tiny}
Y.~Le and X.~Yang.
\newblock Tiny imagenet visual recognition challenge.
\newblock {\em CS 231N}, 2015.

\bibitem{lecun1998mnist}
Y.~LeCun.
\newblock The mnist database of handwritten digits.
\newblock {\em http://yann. lecun. com/exdb/mnist/}, 1998.

\bibitem{lee2019overcoming}
K.~Lee, K.~Lee, J.~Shin, and H.~Lee.
\newblock Overcoming catastrophic forgetting with unlabeled data in the wild.
\newblock In {\em Proceedings of the IEEE International Conference on Computer
  Vision}, pages 312--321, 2019.

\bibitem{lee2018gradient}
Y.~Lee and S.~Choi.
\newblock Gradient-based meta-learning with learned layerwise metric and
  subspace.
\newblock In {\em International Conference on Machine Learning}, pages
  2933--2942, 2018.

\bibitem{Li_2019_ICCV}
S.~Li, D.~Chen, B.~Liu, N.~Yu, and R.~Zhao.
\newblock Memory-based neighbourhood embedding for visual recognition.
\newblock In {\em Proceedings of the IEEE/CVF International Conference on
  Computer Vision (ICCV)}, October 2019.

\bibitem{li2017learning}
Z.~Li and D.~Hoiem.
\newblock Learning without forgetting.
\newblock {\em IEEE transactions on pattern analysis and machine intelligence},
  40(12):2935--2947, 2017.

\bibitem{lopez2017gradient}
D.~Lopez-Paz and M.~Ranzato.
\newblock Gradient episodic memory for continual learning.
\newblock In {\em Advances in Neural Information Processing Systems}, pages
  6467--6476, 2017.

\bibitem{loshchilov2016sgdr}
I.~Loshchilov and F.~Hutter.
\newblock Sgdr: Stochastic gradient descent with warm restarts.
\newblock {\em arXiv preprint arXiv:1608.03983}, 2016.

\bibitem{mccloskey1989catastrophic}
M.~McCloskey and N.~J. Cohen.
\newblock Catastrophic interference in connectionist networks: The sequential
  learning problem.
\newblock In {\em Psychology of learning and motivation}, volume~24, pages
  109--165. Elsevier, 1989.

\bibitem{nguyen2017variational}
C.~V. Nguyen, Y.~Li, T.~D. Bui, and R.~E. Turner.
\newblock Variational continual learning.
\newblock {\em arXiv preprint arXiv:1710.10628}, 2017.

\bibitem{parisi2019continual}
G.~I. Parisi, R.~Kemker, J.~L. Part, C.~Kanan, and S.~Wermter.
\newblock Continual lifelong learning with neural networks: A review.
\newblock {\em Neural Networks}, 2019.

\bibitem{park2019continual}
D.~Park, S.~Hong, B.~Han, and K.~M. Lee.
\newblock Continual learning by asymmetric loss approximation with single-side
  overestimation.
\newblock In {\em Proceedings of the IEEE International Conference on Computer
  Vision}, pages 3335--3344, 2019.

\bibitem{pentina2014pac}
A.~Pentina and C.~Lampert.
\newblock A pac-bayesian bound for lifelong learning.
\newblock In {\em International Conference on Machine Learning}, pages
  991--999, 2014.

\bibitem{prabhu2020gdumb}
A.~Prabhu, P.~H. Torr, and P.~K. Dokania.
\newblock Gdumb: A simple approach that questions our progress in continual
  learning.
\newblock In {\em European Conference on Computer Vision}, pages 524--540.
  Springer, 2020.

\bibitem{rebuffi2017icarl}
S.-A. Rebuffi, A.~Kolesnikov, G.~Sperl, and C.~H. Lampert.
\newblock icarl: Incremental classifier and representation learning.
\newblock In {\em Proceedings of the IEEE Conference on Computer Vision and
  Pattern Recognition}, pages 2001--2010, 2017.

\bibitem{ren2015faster}
S.~Ren, K.~He, R.~Girshick, and J.~Sun.
\newblock Faster r-cnn: Towards real-time object detection with region proposal
  networks.
\newblock In {\em Advances in neural information processing systems}, pages
  91--99, 2015.

\bibitem{robbins1951stochastic}
H.~Robbins and S.~Monro.
\newblock A stochastic approximation method.
\newblock {\em The annals of mathematical statistics}, pages 400--407, 1951.

\bibitem{ruder2016overview}
S.~Ruder.
\newblock An overview of gradient descent optimization algorithms.
\newblock {\em arXiv preprint arXiv:1609.04747}, 2016.

\bibitem{rusu2016progressive}
A.~A. Rusu, N.~C. Rabinowitz, G.~Desjardins, H.~Soyer, J.~Kirkpatrick,
  K.~Kavukcuoglu, R.~Pascanu, and R.~Hadsell.
\newblock Progressive neural networks.
\newblock {\em arXiv preprint arXiv:1606.04671}, 2016.

\bibitem{shin2017continual}
H.~Shin, J.~K. Lee, J.~Kim, and J.~Kim.
\newblock Continual learning with deep generative replay.
\newblock In {\em Advances in Neural Information Processing Systems}, pages
  2990--2999, 2017.

\bibitem{shmelkov2017incremental}
K.~Shmelkov, C.~Schmid, and K.~Alahari.
\newblock Incremental learning of object detectors without catastrophic
  forgetting.
\newblock In {\em Proceedings of the IEEE International Conference on Computer
  Vision}, pages 3400--3409, 2017.

\bibitem{silver2013lifelong}
D.~L. Silver, Q.~Yang, and L.~Li.
\newblock Lifelong machine learning systems: Beyond learning algorithms.
\newblock In {\em 2013 AAAI spring symposium series}, 2013.

\bibitem{simonyan2014very}
K.~Simonyan and A.~Zisserman.
\newblock Very deep convolutional networks for large-scale image recognition.
\newblock {\em arXiv preprint arXiv:1409.1556}, 2014.

\bibitem{sodhani2018training}
S.~Sodhani, S.~Chandar, and Y.~Bengio.
\newblock On training recurrent neural networks for lifelong learning.
\newblock {\em arXiv preprint arXiv:1811.07017}, 2018.

\bibitem{su2020adapting}
P.~Su, K.~Wang, X.~Zeng, S.~Tang, D.~Chen, D.~Qiu, and X.~Wang.
\newblock Adapting object detectors with conditional domain normalization.
\newblock In {\em Proceedings of the European Conference on Computer Vision
  (ECCV)}. Springer, 2020.

\bibitem{MutualCRF}
S.~Tang, D.~Chen, L.~Bai, Y.~Ge, and W.~Ouyang.
\newblock Mutual crf-gnn for few shot learning.
\newblock In {\em Proceedings of the IEEE Conference on Computer Vision and
  Pattern Recognition}, 2021.

\bibitem{tang2021gradient}
S.~Tang, P.~Su, D.~Chen, and W.~Ouyang.
\newblock Gradient regularized contrastive learning for continual domain
  adaptation.
\newblock {\em arXiv preprint arXiv:2103.12294}, 2021.

\bibitem{tao2020topology}
X.~Tao, X.~Chang, X.~Hong, X.~Wei, and Y.~Gong.
\newblock Topology-preserving class-incremental learning.
\newblock Springer, 2020.

\bibitem{van2019three}
G.~M. van~de Ven and A.~S. Tolias.
\newblock Three scenarios for continual learning.
\newblock {\em arXiv preprint arXiv:1904.07734}, 2019.

\bibitem{wu2019large}
Y.~Wu, Y.~Chen, L.~Wang, Y.~Ye, Z.~Liu, Y.~Guo, and Y.~Fu.
\newblock Large scale incremental learning.
\newblock In {\em Proceedings of the IEEE Conference on Computer Vision and
  Pattern Recognition}, pages 374--382, 2019.

\bibitem{you2017large}
Y.~You, I.~Gitman, and B.~Ginsburg.
\newblock Large batch training of convolutional networks.
\newblock {\em arXiv preprint arXiv:1708.03888}, 2017.

\bibitem{zenke2017continual}
F.~Zenke, B.~Poole, and S.~Ganguli.
\newblock Continual learning through synaptic intelligence.
\newblock In {\em Proceedings of the 34th International Conference on Machine
  Learning-Volume 70}, pages 3987--3995. JMLR. org, 2017.

\bibitem{zeno2018task}
C.~Zeno, I.~Golan, E.~Hoffer, and D.~Soudry.
\newblock Task agnostic continual learning using online variational bayes.
\newblock {\em arXiv preprint arXiv:1803.10123}, 2018.

\bibitem{zhao2021continual}
B.~Zhao, S.~Tang, D.~Chen, H.~Bilen, and R.~Zhao.
\newblock Continual representation learning for biometric identification.
\newblock In {\em Proceedings of the IEEE/CVF Winter Conference on Applications
  of Computer Vision}, pages 1198--1208, 2021.

\bibitem{CRCD}
J.~Zhu, S.~Tang, D.~Chen, S.~Yu, Y.~Liu, et~al.
\newblock Complementary relation contrastive distillation.
\newblock In {\em Proceedings of the IEEE Conference on Computer Vision and
  Pattern Recognition}, 2021.

\end{thebibliography}
}

\end{document}